\let\proof\@undefined
\let\endproof\@undefined
\newtheorem{assumption*}{Assumption}
\newtheorem{theorem*}{Theorem}
\newtheorem{lemma}{Lemma}
\newtheorem{problem}{Problem}
\newtheorem{proposition}{Proposition}
\newtheorem{observation*}{Observation}
\newtheorem{remark*}{Remark}
\newcommand{\docupdate}[1]{{#1}}
\newcommand{\possibleremoval}[1]{{#1}}
\newcommand{\possibleaddition}[1]{}
\title{Approximate Environment Decompositions for Robot Coverage Planning using Submodular Set Cover}
\author{Megnath Ramesh, Frank Imeson, Baris Fidan, and Stephen L. Smith
\thanks{M. Ramesh (\ramesh) and S. L. Smith (\smith) are with the Department of Electrical and Computer Engineering and B. Fidan (\fidan) is with the Department of Mechanical and Mechatronics Engineering, at the University of Waterloo, Waterloo ON, Canada}
\thanks{F. Imeson (\imeson) is with Avidbots Corp., Kitchener ON, Canada}
}
\date{}
\begin{document}

\maketitle

\begin{abstract}
In this paper, we investigate the problem of decomposing 2D environments for robot coverage planning. Coverage path planning (CPP) involves computing a cost-minimizing path for a robot equipped with a coverage or sensing tool so that the tool visits all points in the environment. CPP is an NP-Hard problem, so existing approaches simplify the problem by decomposing the environment into the minimum number of \textit{sectors}. Sectors are sub-regions of the environment that can each be covered using a lawnmower path (i.e., along parallel straight-line paths) oriented at an angle. However, traditional methods either limit the coverage orientations to be axis-parallel (horizontal/vertical) or provide no guarantees on the number of sectors in the decomposition. We introduce an approach to decompose the environment into possibly overlapping \textit{rectangular} sectors. We provide an approximation guarantee on the number of sectors computed using our approach for a given environment. We do this by leveraging the submodular property of the sector coverage function, which enables us to formulate the decomposition problem as a submodular set cover (SSC) problem with well-known approximation guarantees for the greedy algorithm. Our approach improves upon existing coverage planning methods, as demonstrated through an evaluation using maps of complex real-world environments.


\end{abstract}

\section{Introduction}

Coverage path planning (CPP) is the problem of generating a minimum cost path for a robot equipped with a sensing or coverage tool such that the tool visits all points of the robot's environment \cite{galceranSurveyCoveragePath2013}. CPP has a wide range of applications in cleaning \cite{palacios-gasosOptimalPathPlanning2017}, agriculture \cite{jinCoverageControlAutonomous2013}, and surveillance tasks for search-and-rescue robots \cite{grontvedDecentralizedMultiUAVTrajectory2023}. CPP is an NP-Hard problem \cite{arkinApproximationAlgorithmsLawn2000} and thus research is primarily focused on computing approximations to the optimal coverage path. Planning approaches in literature typically use a two-step framework to make CPP more tractable, albeit at the cost of solution quality. This framework involves: (i) decomposing the environment into sub-regions which can each be individually covered, and (ii) computing a visitation order or a tour of the sub-regions. In this work, we refer to these sub-regions as "sectors". The environment decomposition step is critical as it determines the majority of the coverage path, and also affects the cost of the tour connecting the sectors. Therefore, it is important to compute decompositions that account for these effects on the coverage path.




In literature, CPP algorithms are grouped into exact or approximate approaches depending on the choice of environment decomposition. Exact approaches usually involve decomposing the environment into convex (or monotone) sectors \cite{bahnemannRevisitingBoustrophedonCoverage2021, huangOptimalLinesweepbasedDecompositions2001a, agarwalAreaCoverageMultiple2022a}. Each sector is then covered by a lawnmower path (i.e. using parallel coverage lines) along an optimal orientation as determined by the geometry of the sector. However, when applied to complex environments with irregular boundaries, exact CPP approaches are susceptible to \textit{over-decomposition} where some sectors are thinner than the coverage tool. This is a consequence of covering the entire environment and results in many instances of \textit{double coverage} where the robot returns to areas it has already covered. Additionally, these approaches provide few guarantees on the number of sectors in the decomposition.


In contrast, approximate CPP approaches use grid decompositions to cover areas of the environment that are reachable by the coverage tool. The decomposition is composed of non-overlapping (usually square) grid cells of size equal to the coverage tool \cite{luTMSTCPathPlanning2023, ianenkoCoveragePathPlanning2020, rameshOptimalPartitioningNonConvex2022, rameshAnytimeReplanningRobot2023}. The CPP problem is then to compute an optimal tour that visits each grid cell and minimizes the amount of double coverage. In our previous works \cite{rameshOptimalPartitioningNonConvex2022, rameshAnytimeReplanningRobot2023}, we proposed an approach that minimizes double coverage by minimizing the number of coverage lines (i.e. straight-line paths). However, using grid cells constrains the robot to cover the environment along axis-parallel (horizontal or vertical) orientations, leading to ``staircase-like" coverage paths which are sub-optimal. We look to achieve a decomposition that addresses the shortcomings of both exact and approximate approaches by (i) preventing over-decomposition and (ii) removing axis-parallel constraints. 



In this paper, we propose a decomposition of the environment into possibly overlapping rectangular sectors. We choose rectangles for the following reasons: (i) the lawnmower path to cover a rectangle can be optimally oriented along the longest edge \cite{huangOptimalLinesweepbasedDecompositions2001a}, and (ii) one can efficiently identify the largest rectangle in an environment \cite{marzehAlgorithmFindingLargest2019a, danielsFindingLargestArea1997}. We refer to this as the \textit{sector decomposition} problem. Inspired by works in sensor coverage \cite{corahDistributedSubmodularMaximization2018, mehrSubmodularApproachOptimal2018}, we leverage the \textit{submodularity} property of covering the environment with sectors to design our solution approach. Specifically, we show that the sector decomposition problem is an instance of the \textit{submodular set cover} (SSC) problem, which has been studied extensively in literature \cite{wolseyAnalysisGreedyAlgorithm1982, iyerSubmodularOptimizationSubmodular2013, NEURIPS2023_e5eaf67f}. In our proposed approach, we use the result that the greedy algorithm for SSC has a well-known approximation guarantee \cite{wolseyAnalysisGreedyAlgorithm1982}.


\noindent\textbf{Contributions:} Our specific contributions are as follows:
\begin{enumerate}
    \item We formulate the sector decomposition problem that aims to minimize the number of sectors and show that this is an instance of the submodular set cover problem.
    \item We propose a greedy algorithm to solve the sector decomposition problem, for which we provide an approximation guarantee on the number of sectors following that of \cite{wolseyAnalysisGreedyAlgorithm1982}.
    \item Finally, we present simulation results using maps of real-world environments and perform comparisons against state-of-the-art coverage planning approaches. 
\end{enumerate}


The paper is organized as follows. In Section \ref{sec:prelims}, we provide some background on the submodular set cover (SSC) problem. In Section \ref{sec:prob-def}, we introduce the sector decomposition problem. In Section \ref{sec:link-to_submod}, we motivate the minimization of sectors and show that the sector decomposition problem is an instance of SSC. In Section \ref{sec:sect_decomp}, we present our solution approach that greedily computes a sector decomposition for an environment. In Section \ref{sec:sect_touring}, we briefly describe how the decomposition is used to compute a coverage path for the environment. Finally, in Section \ref{sec:sim-results}, we provide the results of generating coverage paths for real-world environments using the proposed approach.

\section{Preliminaries}
\label{sec:prelims}

In this section, we provide a brief background on submodular functions and the submodular set cover (SSC) problem on which our decomposition approach is based.

\subsection{Submodular functions}

Let us consider a set $X$ (not necessarily finite) and let the set $2^X$ be the power set of $X$, i.e., the set of all subsets of $X$. A function $f : 2^X \rightarrow \mathbb{R}_{\geq 0}$ is \textit{submodular} if for all $A \subseteq B \subseteq X$ and $x \in X \setminus B$, we have
\[f(A \cup \{x\}) - f(A) \geq f(B \cup \{x\}) - f(B).\]
This is known as the property of diminishing returns. Equivalently, a function $f : 2^X \rightarrow \mathbb{R}_{\geq 0}$ is submodular if for every $A,B \subseteq X$, 
\[f(A) + f(B) \geq f(A \cup B) + f(A \cap B).\]

In addition to submodularity, we consider functions $f$ that possess the following additional properties:
\begin{enumerate}
    \item \textit{Monotonicity}: For all $A \subseteq B \subseteq X$, $f(A) \leq f(B)$,
    \item \textit{Normalization}: For the empty set $\emptyset$, $f(\emptyset) = 0$.
\end{enumerate}

\subsection{Submodular Set Cover}


Consider a set $X$ and a monotone submodular function $f: 2^X \rightarrow \mathbb{R}$ for which $f(X)$ is finite. The submodular set cover problem aims to compute a minimum cardinality subset $S \subseteq X$ subject to a submodular constraint \docupdate{$f(S) = f(X)$}. Formally, it can be represented by the following optimization problem:
\begin{align*}
    \min_{S \subseteq X} \quad& |S|\\
    \text{s.t.} \quad& \docupdate{f(S) = f(X)}.
\end{align*}

The submodular set cover problem is NP-Hard. However, a simple greedy algorithm provides the best-known performance guarantee for this problem \cite{wolseyAnalysisGreedyAlgorithm1982}. Let $\delta_{x}(S)$ for some $S \subseteq X$ and $x \in X \setminus S$ denote the marginal return obtained by adding $x$ to $S$: 
\[\delta_x(S) = f(S \cup \{x\}) - f(S).\] 

The greedy algorithm for this problem iteratively constructs a set $S$ as follows: at each iteration $r$, the algorithm adds the element $x \in X \setminus S$ that maximizes the marginal return $\delta_{x}(S)$. The greedy algorithm terminates after a feasible solution $S$ where $f(S) = f(X)$ is obtained.


\section{Problem definition}
\label{sec:prob-def}

Consider an indoor environment as represented by a set $\mathcal{W} \subseteq \mathbb{R}^2$. We look to cover the environment using the robot's coverage tool, which we assume to be a square of width $l$. This assumption follows from other works in coverage planning including \cite{arkinApproximationAlgorithmsLawn2000, rameshOptimalPartitioningNonConvex2022}. Specifically, we look to decompose $\mathcal{W}$ into (possibly overlapping) rectangular subsets called \textit{sectors}. Let $\mathcal{Q}$ denote a set of candidate sectors where each sector $Q_i \in \mathcal{Q}$ represents a rectangle in the environment (i.e., $Q_i \subseteq \mathcal{W}$) with the longest edge oriented along an angle $\theta_i \in [0,\pi)$. Note that the set $\mathcal{Q}$ can be uncountably infinite.


We now define the functions to compute the coverage of the environment using sectors. Let $|Q|$ denote the area of a set $Q \subseteq \mathbb{R}^2$, and let $a:2^\mathcal{Q} \rightarrow \mathbb{R}$ be a function that computes the area of the union of a subset $\mathcal{S} \subseteq \mathcal{Q}$:
\begin{align}
a(\mathcal{S}) = \Big\lvert\bigcup_{Q_i \in \mathcal{S}} Q_i\Big\rvert. \label{eq:area_fn}
\end{align}
We consider a set of candidate sectors $\mathcal{Q}$ such that its union covers $\mathcal{W}$, i.e. $a(\mathcal{Q}) = |\mathcal{W}|$ (i.e., total area of the environment). This is so that there exists a decomposition that covers the entire environment if necessary. 

We are now ready to present the main problem of this paper. Given the environment $\mathcal{W}$, we look to decompose $\mathcal{W}$ into the minimum number of rectangular sectors. However, we also look to prevent cases of over-decomposition, where the decomposition may contain small sectors covering inaccessible areas in the environment. To do this, we seek to find a decomposition such that the area within $\mathcal{W}$ covered by the sectors is at least $\gamma|\mathcal{W}|$, where $\gamma \in (0,1]$ is a design parameter we call the \emph{minimum coverage ratio}.

\begin{problem}[Sector Decomposition]
    Given an environment $\mathcal{W}$, a set of candidate rectangular sectors $\mathcal{Q}$ and a minimum coverage ratio $\gamma$, compute a set of sectors $\mathcal{S} \subseteq \mathcal{Q}$ that solves:
    \begin{align*}
        \min_{\mathcal{S} \subseteq \mathcal{Q}}  \quad& |\mathcal{S}|\\
        \text{s.t.} \quad& a(\mathcal{S}) \geq \gamma|\mathcal{W}|.
    \end{align*}
    
    \label{prob:sect_decomp}
\end{problem}

Using the sectors computed by solving Problem \ref{prob:sect_decomp}, we obtain the coverage lines (i.e., straight-line paths) that form the lawnmower path for the sector. Note that there are \textit{two} possible lawnmower paths depending on the connections between the coverage lines. From this, we generate a coverage path for the environment using an approach called \textit{sector touring} where: (i) we pick a lawnmower path for each sector, and (ii) connect the lawnmower paths by computing a cost-minimizing tour between the path endpoints. We refer to the connections between the lawnmower paths as \textit{transition paths}. An example coverage path for a set of three sectors is shown in Fig. \ref{fig:sector-coverage-approximation}. We provide more details on sector touring in Section \ref{sec:sect_touring}.

\begin{figure}
    \centering
    \includegraphics[width=\linewidth]{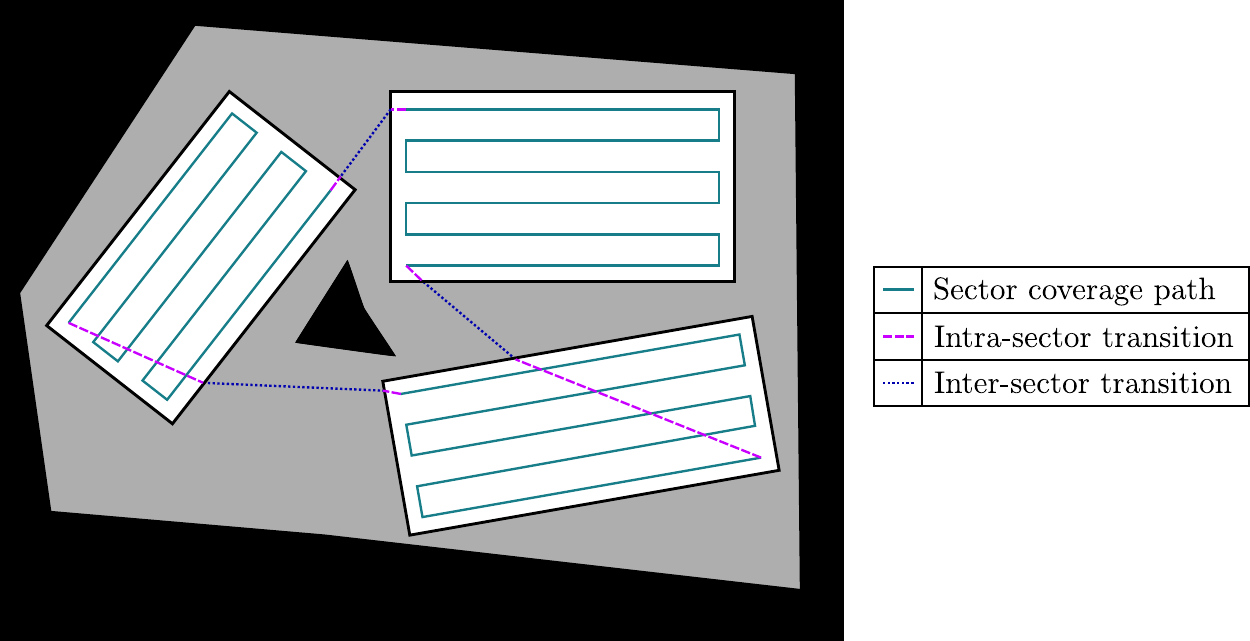}
    \caption{Coverage path resulting from an environment decomposition containing three sectors. Each sector is covered by a lawnmower path and the robot transitions between sectors using (i) intra-sector paths, and (ii) inter-sector paths.
    \vspace{-2mm}}
    \label{fig:sector-coverage-approximation}
\end{figure}

\begin{remark*}
    While we consider rectangular sectors in this paper, one could more generally consider sectors that are monotone with a well-defined orientation for the lawnmower path. However, in our solution approach, we leverage existing approaches \cite{marzehAlgorithmFindingLargest2019a} that efficiently compute the largest inscribed rectangle within the environment at an orientation.
\end{remark*}


\section{Link to Submodular Set Cover}
\label{sec:link-to_submod}

In this section, we establish that Problem \ref{prob:sect_decomp} is an instance of SSC. We motivate minimizing the number of sectors by deriving a bound on the worst-case coverage path length resulting from a decomposition of rectangular sectors. \possibleremoval{We then prove the submodularity of the area function $a$, which, based on~\cite{wolseyAnalysisGreedyAlgorithm1982}, implies that the greedy algorithm provides a polynomial-time approximation algorithm.} \possibleaddition{Following~\cite{wolseyAnalysisGreedyAlgorithm1982} and the submodularity of the area function $a$ (proof deferred to extended version due to space constraints), this implies that the greedy algorithm provides a polynomial-time approximation algorithm.}

\subsection{Coverage Path Optimality}
Here, we motivate the minimization of the number of sectors in the decomposition by considering the length of the coverage path resulting from sector touring. Giving an exact characterization of the relationship between the sector decomposition and the coverage path length is challenging. Thus, our approach is to bound the length of the coverage path obtained from sector touring and minimizing this bound. \docupdate{For convex environments, we may trivially bound the coverage path length using a single sector covering the environment. However, for non-convex environments, such as those considered in this paper, obtaining a similar bound requires more in-depth analysis.}

Consider a decomposition of the environment into rectangular sectors. The lawnmower path length to optimally cover a rectangular sector can be tightly bounded using the rectangle dimensions and the coverage tool width $l$. Consider a rectangular sector $Q$ of width $w$ and height $h$. Without loss of generality, let $w \geq h$ and consider a lawnmower path with coverage lines along the longest edge (i.e. width) of the rectangle. Now, let $P_Q$ be the path to cover $Q$ and let $L(P_Q)$ be its length.

\begin{lemma}
    Given a rectangular sector $Q$ of width $w$ and height $h$, the coverage path $P_Q$ has a length $L(P_Q)$ satisfying
    \[L(P_Q) \leq w \Big\lceil \frac{h}{l} \Big\rceil.\]
    
    \label{lem:rect_sector}
\end{lemma}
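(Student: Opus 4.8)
The plan is to split the lawnmower path $P_Q$ into its two constituent motions—the straight coverage lines that run along the width, and the short transitions joining consecutive lines—and bound each before summing. First I would count the coverage lines. Since the tool is a square of width $l$ and each line sweeps a strip of height $l$ across the rectangle, covering the full height $h$ requires exactly $n = \lceil h/l \rceil$ strips, with the last strip possibly overhanging the top edge. This fixes the number of coverage lines, and hence the number of transitions at $n-1$, since a boustrophedon traversal connects each adjacent pair of lines exactly once.

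Next I would bound the two lengths. To keep the tool inside $Q$ while still covering both vertical edges, the robot's reference point need only travel horizontally from a distance $l/2$ off the left edge to $l/2$ off the right edge, so each coverage line has length at most $w - l$ (taking $w \geq l$). Each transition is a vertical move between adjacent strips, whose spacing can be taken to be at most $l$, so each of the $n-1$ transitions has length at most $l$. Summing the two contributions gives
\[
L(P_Q) \leq (w-l)\Big\lceil \tfrac{h}{l} \Big\rceil + l\Big(\Big\lceil \tfrac{h}{l} \Big\rceil - 1\Big) = w\Big\lceil \tfrac{h}{l} \Big\rceil - l \leq w\Big\lceil \tfrac{h}{l} \Big\rceil,
\]
which is the claimed bound.

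I expect the main obstacle to be the geometric bookkeeping rather than the arithmetic: I must argue carefully that the per-line length is $w-l$ and not $w$, because it is precisely this ``inset'' of $l$ per line that absorbs the transition cost and lets the final bound carry no additive turn term. I would also need to confirm the edge cases. When $w < l$ we have $\lceil h/l \rceil = 1$ and a single pass of length at most $w$ suffices, so the bound holds directly; and when $h$ is not an integer multiple of $l$, the final strip and its transition can be made shorter, which only strengthens the inequality. Once these cases are pinned down, the telescoping cancellation between the $-l$ contributed by each line and the $+l$ contributed by each transition yields the stated result.
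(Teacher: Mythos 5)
Your proof is correct and follows essentially the same route as the paper's: count $\lceil h/l \rceil$ coverage lines of length $w-l$ each (exploiting the $l/2$ inset from the perimeter) plus connecting segments of length at most $l$, and sum. Your accounting is in fact marginally sharper—you use $\lceil h/l\rceil - 1$ transitions rather than pairing every line with a connection, yielding $w\lceil h/l\rceil - l$—and you explicitly handle the $w<l$ edge case, but the underlying argument is the same.
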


\possibleremoval{
\begin{proof}
    The length of each coverage line in the lawnmower path is $w - l$ as the tool can stay at a distance of $l/2$ away from the perimeter of the rectangle to cover it. To connect each coverage line to an adjacent line, we require a \textit{connecting path} of length at most $l$. So each line-connection pair has a length of $w$. 
    
    If $h$ is a multiple of $l$, then the number of line-connection pairs is exactly $h/l$, and the bound is tight. If $h$ is not a multiple of $l$, i.e., $h \neq cl$ for some $c \in \mathbb{N}$. The number of coverage lines needed to cover $Q$ is $\lceil h/l \rceil$. Therefore, the total path length is upper-bounded by $w \lceil h/l \rceil$, which proves the lemma.
\end{proof}}

\possibleaddition{Due to space constraints, we defer the proof of Lemma \ref{lem:rect_sector} to the extended version.} Let $\mathcal{S}$ be a set of rectangular sectors that cover the environment $\mathcal{W}$, i.e. $a(\mathcal{S}) = |\mathcal{W}|$. Given $\mathcal{S}$, let $P$ be the coverage path obtained by connecting the lawnmower paths using the aforementioned approach, and let $L(P)$ be the length of $P$. Since the sectors can overlap, the sum of sector areas is given by
\[\sum_{Q \in \mathcal{S}} |Q| = (1 + \alpha) |\mathcal{W}|,\]
where $\alpha$ is the ratio of $|\mathcal{W}|$ that is double-covered by sectors in $\mathcal{S}$. Also, let $w'$ represent the longest width of all sectors in $\mathcal{S}$.

Now, let $P^*$ be the optimal coverage path of $\mathcal{W}$ (unconstrained by sectors). We now bound $L(P)$ with respect to $L(P^*)$.

\begin{proposition}[Coverage Path Length Optimality]
    Given an environment $\mathcal{W}$ and a coverage tool of width $l$, let $P^*$ be an optimal coverage path of $\mathcal{W}$, with length $L(P^*)$. Consider a decomposition of $\mathcal{W}$ into rectangular sectors $\mathcal{S}$, and let $\alpha$ be the ratio of sector overlap and $w'$ be the longest sector width. Then, the coverage path $P$ resulting from sector touring has length $L(P)$ that satisfies:
    \begin{align*}
        L(P) &\leq (2 + \alpha) L(P^*) + (\sqrt{2} + 1)w'|\mathcal{S}|.
    \end{align*}
    
    \label{prop:cpp_length_opt}
\end{proposition}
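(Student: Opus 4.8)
The plan is to split the coverage path length into two contributions, $L(P) = \sum_{Q \in \mathcal{S}} L(P_Q) + L_{\mathrm{trans}}$, where $L(P_Q)$ is the lawnmower path length inside sector $Q$ and $L_{\mathrm{trans}}$ is the total length of the transition paths connecting the lawnmower paths, and then to bound each contribution separately against $L(P^*)$. The two $w'|\mathcal{S}|$-type terms and the two $L(P^*)$-type terms in the target inequality will come one from each contribution.

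For the lawnmower contribution I would apply Lemma~\ref{lem:rect_sector} to every sector and use $\lceil h_Q/l \rceil \le h_Q/l + 1$ together with $w_Q h_Q = |Q|$ to obtain $\sum_{Q} L(P_Q) \le \tfrac{1}{l}\sum_Q |Q| + \sum_Q w_Q$. Since the sectors overlap with ratio $\alpha$ we have $\sum_Q |Q| = (1+\alpha)|\mathcal{W}|$, and since $w'$ is the largest width $\sum_Q w_Q \le w'|\mathcal{S}|$. The key conversion is a lower bound on the optimum: because a tool of width $l$ sweeps at most $l\cdot L$ units of area along a path of length $L$, any coverage path, in particular $P^*$, satisfies $|\mathcal{W}| \le l\,L(P^*)$ up to a lower-order end-cap term, i.e. $|\mathcal{W}|/l \le L(P^*)$. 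Substituting yields $\sum_Q L(P_Q) \le (1+\alpha)L(P^*) + w'|\mathcal{S}|$, which supplies the $(1+\alpha)L(P^*)$ and the first $w'|\mathcal{S}|$ of the bound.

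For the transition contribution I would exploit that $P^*$ covers all of $\mathcal{W}$ and hence the tool visits every sector. Ordering the sectors $Q_1,\dots,Q_{|\mathcal{S}|}$ by the time $P^*$ first enters them, the sub-paths of $P^*$ between consecutive entry points are disjoint and appear in order, so a tour visiting one reference point per sector (I would take the rectangle centers $o_i$) has length at most $L(P^*)$. Because the actual transition paths come from a \emph{cost-minimizing} tour between lawnmower endpoints, $L_{\mathrm{trans}}$ is dominated by the cost of the specific tour built from this ordering. Each transition between the exit endpoint $b_i$ of $Q_i$ and the entry endpoint $a_{i+1}$ of $Q_{i+1}$ is bounded by $d(b_i,o_i) + d(o_i,o_{i+1}) + d(o_{i+1},a_{i+1})$; since every lawnmower endpoint lies within half the rectangle diagonal of its center, each per-sector detour term is at most $\tfrac12\sqrt{w_Q^2+h_Q^2} \le \tfrac{\sqrt2}{2}w'$, contributing $\sqrt2\,w'$ per sector, while $\sum_i d(o_i,o_{i+1}) \le L(P^*)$. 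This gives $L_{\mathrm{trans}} \le L(P^*) + \sqrt{2}\,w'|\mathcal{S}|$, and adding the two contributions produces the claimed bound.

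I expect the transition contribution to be the main obstacle. The delicate points are (i) getting the diagonal constant exactly right, so that the two per-sector detours sum to $\sqrt2\,w'$ rather than $2\sqrt2\,w'$, which is precisely why the rectangle \emph{center}, not an arbitrary entry point, is the right reference to charge detours against, and (ii) arguing that the center-to-center tour is genuinely dominated by $L(P^*)$ rather than by $L(P^*)$ plus per-sector corrections of order $l$ incurred because $P^*$ passes only near, not exactly through, the centers; these corrections must be shown to be absorbed, e.g. using $l \le h_Q \le w'$. The area lower bound used in the lawnmower contribution is standard but likewise needs the end-cap term handled so that $|\mathcal{W}|/l \le L(P^*)$ holds in the form used above.
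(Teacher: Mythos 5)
Your proposal is correct and follows essentially the same route as the paper's proof: the same split into lawnmower and transition lengths, the same area lower bound $L(P^*) \geq |\mathcal{W}|/l$ combined with Lemma~\ref{lem:rect_sector} to get the $(1+\alpha)L(P^*) + w'|\mathcal{S}|$ term, and the same tour-plus-diagonal-detour argument giving $L_t \leq L(P^*) + \sqrt{2}\,w'|\mathcal{S}|$. The only cosmetic difference is that you charge the two intra-sector detours to the rectangle center (half a diagonal each) whereas the paper charges them to a single corner (one full diagonal in total); both yield the same $\sqrt{2}\,w'$ per sector, and the paper is no more careful than you are about the caveat you flag, namely that $P^*$ only passes near, not exactly through, the chosen reference points.
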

\begin{proof}
    First, note that
    \begin{align}
        L(P^*) \geq \frac{|\mathcal{W}|}{l}. \label{eq:lower_bound}
    \end{align}
    This follows since the area covered by a tool of width $l$ along any path of length $L$ is at most $lL$, with equality if the path is a straight line.
    The path $P$ is obtained by (i) covering each sector individually using lawnmower paths, and (ii) connecting the lawnmower paths using transition paths. Let $L_s$ be the total length of the lawnmower paths. Following Lemma \ref{lem:rect_sector},
    \begin{align*}
        L_s \leq \sum_{Q \in \mathcal{S}} \Big( \frac{|Q|}{l} + w' \Big) \leq \frac{\sum_{Q \in \mathcal{S}} |Q|}{l} + |\mathcal{S}|w'
    \end{align*}

    Following the definition of $\alpha$, we get the following ratio between the sum of sector areas and $l$:
    \[\frac{\sum_{Q \in \mathcal{S}} |Q|}{l}= \frac{(1 + \alpha)|\mathcal{W}|}{l} \leq (1 + \alpha) L(P^*).\]
    
    We now bound the length of the transition paths $L_t$ that connect the lawnmower paths. We split the transition paths into two path segments: (i) inter-sector (path between sectors), and (ii) intra-sector (path traveled within each sector). Fig. \ref{fig:sector-coverage-approximation} illustrates these path segments in an example coverage path. To bound $L_t$, we consider computing transition paths as follows: (i) we use a TSP tour that visits a single corner of each sector (inter-sector), and (ii) connect each ends of the lawnmower path to the corner (intra-sector).
    
    To bound the inter-sector path segments, let TOUR($\mathcal{S}$) be the length of the TSP tour of the sector corners. Note that the optimal coverage path $P^*$ for the environment visits each sector at least once. So, we have that
    \[\mathrm{TOUR}(\mathcal{S}) \leq L(P^*).\]

    We now bound the intra-sector path segments for each sector. A lawnmower path for a rectangle starts close to one sector corner and ends at a different corner. Therefore, the cumulative path length to reach any corner from the endpoints of a lawnmower path is at most the diameter of the sector (length of the diagonal). Given that $w'$ is the longest sector width in the decomposition, this diameter is at most $\sqrt{2} w'$. As a result, we have that
    \[L_t \leq L(P^*) + \sqrt{2}|\mathcal{S}|w'.\]

    Adding all the path lengths, we arrive at the final result:
    \begin{align*}
    L(P) &= L_s + L_t \\
    &\leq (1 + \alpha) L(P^*) + (1 + \sqrt{2})|\mathcal{S}|w' + \mathrm{TOUR}(\mathcal{S}), \\
    &\leq (2 + \alpha) L(P^*) + (1 + \sqrt{2})|\mathcal{S}|w'
    \end{align*}
    which proves our proposition.
\end{proof}



Following Proposition \ref{prop:cpp_length_opt}, we observe two factors of the sector decomposition that affect the length of the worst-case coverage path: (i) the number of sectors $|\mathcal{S}|$, and (ii) the area of sector overlap. Therefore, reducing the number of sectors in the decomposition reduces the second term in the bound. Additionally, in our solution approach (Section \ref{sec:sect_decomp}), we address sector overlap by constraining the overlap area between sectors when computing the environment decomposition.

\subsection{Submodularity of sector coverage}

\possibleremoval{
We now show that the coverage function $a$ is a submodular function defined on the set of all candidate sectors $\mathcal{Q}$. \docupdate{This is not immediately obvious as $\mathcal{Q}$ is uncountably infinite.}}

\possibleaddition{\docupdate{Since the set of all candidate sectors $\mathcal{Q}$ is uncountably infinite, it is not immediately obvious that the coverage function $a$ is a submodular function defined on $\mathcal{Q}$.}}

\begin{proposition}
    The coverage function $a: 2^\mathcal{Q} \rightarrow \mathbb{R}$ as defined in Eq. (\ref{eq:area_fn}) is a normalized, monotone and submodular function.
    \label{prop:submodular}
\end{proposition}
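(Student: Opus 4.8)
The plan is to establish the three properties separately, spending almost all of the effort on submodularity. Normalization is immediate, since the union over an empty subcollection is the empty set and hence $a(\emptyset) = 0$. For monotonicity, I would write $U_\mathcal{A} = \bigcup_{x \in \mathcal{A}} x$ for the region covered by a subcollection $\mathcal{A} \subseteq \mathcal{Q}$ (recall each element $x \in \mathcal{Q}$ is itself a rectangle in $\mathcal{W}$); then $\mathcal{A} \subseteq \mathcal{B}$ gives $U_\mathcal{A} \subseteq U_\mathcal{B}$, and monotonicity of area under set inclusion yields $a(\mathcal{A}) \leq a(\mathcal{B})$.

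For submodularity I would use the diminishing-returns form of the definition rather than the two-set inequality, because it isolates a single added sector and makes the argument purely geometric. The key step is a closed form for the marginal return: for $x \in \mathcal{Q} \setminus \mathcal{A}$, the decomposition $U_\mathcal{A} \cup x = U_\mathcal{A} \sqcup (x \setminus U_\mathcal{A})$ is a disjoint union, so additivity of area gives
\[\delta_x(\mathcal{A}) = a(\mathcal{A} \cup \{x\}) - a(\mathcal{A}) = |x \setminus U_\mathcal{A}|.\]
In words, the marginal value of a sector equals the area of the part of it not yet covered. Submodularity then follows at once: for $\mathcal{A} \subseteq \mathcal{B}$ and $x \in \mathcal{Q} \setminus \mathcal{B}$ we have $U_\mathcal{A} \subseteq U_\mathcal{B}$, hence $x \setminus U_\mathcal{B} \subseteq x \setminus U_\mathcal{A}$, and monotonicity of area gives $\delta_x(\mathcal{B}) \leq \delta_x(\mathcal{A})$, which is exactly the diminishing-returns inequality.

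The step I expect to be the real obstacle is not this inequality but the prior question of whether $a$ is even well-defined on all of $2^\mathcal{Q}$: because $\mathcal{Q}$ is uncountable, $U_\mathcal{A}$ may be an uncountable union of rectangles, and an uncountable union of measurable sets need not be measurable. I would resolve this by comparing $U_\mathcal{A}$ with its open core $O_\mathcal{A} = \bigcup_{x \in \mathcal{A}} \mathrm{int}(x)$, which is open and therefore measurable. A Lebesgue-density argument shows the difference $U_\mathcal{A} \setminus O_\mathcal{A}$ is null: at any point of this difference the surrounding rectangle forces the density of $O_\mathcal{A}$ to remain bounded away from zero, so a positive-measure difference would contradict the density theorem. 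Hence $U_\mathcal{A}$ is measurable with $|U_\mathcal{A}| = |O_\mathcal{A}|$, and $a$ is well-defined. As a lighter-weight alternative that already suffices here, I would note that both the SSC formulation and the greedy algorithm evaluate $a$ only on finite subcollections, for which $U_\mathcal{A}$ is a finite union of rectangles and measurability is trivial; the computation above is unchanged under this restriction, so the approximation guarantee is unaffected.
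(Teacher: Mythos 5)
Your proposal is correct and follows essentially the same route as the paper: both arguments reduce the marginal return to $\delta_x(\mathcal{A}) = \lvert x \setminus U_\mathcal{A}\rvert$, the area of the not-yet-covered part of the new sector, and then observe that enlarging the collection shrinks this set (the paper reaches the same containment via DeMorgan's laws rather than directly from $U_\mathcal{A} \subseteq U_\mathcal{B}$). Your additional measurability discussion for uncountable subcollections addresses a point the paper raises in the surrounding text but does not actually treat in its proof, and your observation that finite subcollections suffice for the algorithm is the pragmatic resolution.
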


\possibleremoval{
\begin{proof}
    Normalization holds trivially as the coverage of $\mathcal{W}$ is 0 if no sectors are picked from $\mathcal{Q}$. Similarly, monotonicity holds trivially, since for any set $A \subseteq \mathcal{Q}$, each sector in $A$ must cover a non-negative amount of area in $\mathcal{W}$. 
    
    To show the submodularity of $a$, consider sets $A \subseteq B \subset \mathcal{Q}$ and a sector $ Q \in \mathcal{Q}\setminus B$. From the definition of submodularity in Section \ref{sec:prelims}, the following must hold:
    \begin{align*}
        a(A \cup \{Q\}) - a(A) \geq a(B \cup \{Q\}) - a(B).
    \end{align*}
    For the set $B$, let $\delta_Q(B) = a(B \cup \{Q\}) - a(B)$ represent the marginal return of adding $Q$ to the set $B$. First, notice that
    \[
    \delta_Q(B) = \Big\lvert Q \setminus (\cup_{Q_B \in B} Q_B) \Big\rvert.
    \]
    Now, manipulating this expression we get
    \begin{align*}
        \delta_Q(B) 
        &= \Big\lvert Q \setminus (\cup_{Q_B \in B} Q_B) \Big\rvert\\ 
        &=  \Big\lvert Q \setminus \Big((\cup_{Q_A \in A} Q_A) \cup (\cup_{Q_B \in B\setminus A} Q_B)\Big)\Big\rvert \\ 
        &= \Big\lvert \Big(Q \setminus (\cup_{Q_A \in A} Q_A)\Big) \cap \Big(Q \setminus (\cup_{Q_B \in B\setminus A} Q_B)\Big)\Big\rvert \\
        &\leq \Big\lvert Q \setminus (\cup_{Q_A \in A} Q_A) \Big\rvert\\ &= \delta_Q(A),
    \end{align*}
where the equality in the third line follows from an application of DeMorgan's Laws (i.e., $A \setminus (B\cup C) = (A \setminus B) \cap (A \setminus C)$). This proves the proposition.
\end{proof}}

\possibleaddition{We provide a proof for Proposition \ref{prop:submodular} in the extended version.} Following Proposition \ref{prop:submodular}, we have that Problem \ref{prob:sect_decomp} is an instance of SSC which allows us to use the result from \cite{wolseyAnalysisGreedyAlgorithm1982} to propose a greedy decomposition approach.

    

\section{Sector Decomposition Approach}
\label{sec:sect_decomp}

In this section, we describe our solution approach for the sector decomposition problem. Our approach greedily adds large rectangular sectors to the decomposition that cover new areas of environment. We first describe an approach that leverages computational geometry algorithms to identify large rectangular sectors in the environment (or its subset). We then present a greedy algorithm that computes a sector decomposition while constraining the overlap between the sectors. For the case where the overlap is not constrained, we provide an approximation factor for the greedy algorithm following \cite{wolseyAnalysisGreedyAlgorithm1982}. 

\subsection{Sector identification}
\label{sec:sect-id}

We first identify the largest candidate sector for a given environment $\mathcal{W}$ (or its subset). For this, we solve the problem of computing a rectangle of maximum area in $\mathcal{W}$. In computational geometry literature, this is known as the convex skull or the "potato peeling" problem. Computing a convex skull of general shape and orientation requires $O(n^7)$ runtime, where $n$ is the number of vertices in the polygonal representation of $\mathcal{W}$\cite{changPolynomialSolutionPotatopeeling1986}. Another alternative is to use approximation algorithms \cite{hall-holtFindingLargeSticks2006a} which reduce runtime at the sacrifice of optimality. However, one can compute the largest area axis-parallel rectangle in the environment efficiently using existing algorithms \cite{marzehAlgorithmFindingLargest2019a, danielsFindingLargestArea1997}. Additionally, for the case of coverage planning, one can use the orientations of the environment's edges (both boundary and hole edges) to constrain the robot's coverage directions \cite{huangOptimalLinesweepbasedDecompositions2001a, bahnemannRevisitingBoustrophedonCoverage2021, agarwalAreaCoverageMultiple2022a}. This is to ensure that the robot driving the coverage path covers along long walls and moves predictably in the environment.

Our approach to identify the largest sector is as follows. First, we use a polygonal representation of $\mathcal{W}$ and extract the orientations of the edges to construct a set of candidate orientations $\Theta$. We limit our set of candidate sectors $\mathcal{Q}$ to rectangles oriented along an angle in $\theta \in \Theta$. Given this constraint, we define an axis oriented along each $\theta$ and obtain the largest axis-parallel rectangular sector in $\mathcal{W}$ for each angle. Repeating this for every candidate orientation gives us a set of candidate sectors $\Bar{\mathcal{Q}}$ from which we choose the largest one in our greedy approach.

\subsection{Greedy decomposition}

We now propose an algorithm that decomposes the environment $\mathcal{W}$ by greedily choosing rectangular sectors that cover the largest remaining uncovered area of $\mathcal{W}$. The pseudocode for our approach is given in Algorithm \ref{alg:greedy-sector-decomp}. Until the minimum coverage of the environment is achieved (Line \ref{line:env_uncovered}), we compute a candidate rectangular sector for each candidate orientation $\theta \in \Theta$ (Line \ref{line:candidate_secs}). We then pick the candidate sector with the largest area and add it to our decomposition $\mathcal{S}$ (Lines \ref{line:max}-\ref{line:add}). Once the sector is added, we remove the area covered by the sector from the environment so that future sectors may not have overlapping coverage (Line \ref{line:remove}).

However, in some cases, a small amount of overlap between sectors is useful to cover more area and have better fitting neighbouring sectors. To do this, we introduce a parameter called sector erosion radius $\beta$. This radius is used to compute a Minkowski difference between $Q$ and a disk of radius $\beta$. The difference $\Bar{Q}$ is removed from the environment so that future sectors may overlap with $Q$. Setting $\beta$ to $0$ leads to truly non-overlapping sectors, while arbitrarily large values lead to unconstrained overlapping. In our simulation results in Section \ref{sec:sim-results}, we further motivate the allowance of a small amount of overlap. 

\begin{algorithm}
\caption{Greedy Sector Decomposition (G-Sect)}
\label{alg:greedy-sector-decomp}
 \begin{algorithmic}[1]
 \renewcommand{\algorithmicrequire}{\textbf{Input:}}
 \renewcommand{\algorithmicensure}{\textbf{Output:}}
 \Require Environment $\mathcal{W}$, Candidate orientations $\Theta$, Minimum coverage ratio $\gamma$, Sector erosion radius $\beta$
 \Ensure Sector decomposition $\mathcal{S}$
 \State $\mathcal{S} \gets \emptyset$ \Comment{Intialize to empty set}
 \State $\Bar{\mathcal{W}} \gets \mathcal{W}$ \Comment{Copy environment}
 \While{$a(\mathcal{S}) < \gamma|\mathcal{W}|$} \label{line:env_uncovered}
 \State $\Bar{\mathcal{Q}} \gets $ Candidate sectors for $\Bar{\mathcal{W}}$ for all angles in $\{\theta\}$ \label{line:candidate_secs}
 \State $Q \gets \arg\max_{Q \in \Bar{\mathcal{Q}}} a(\mathcal{S} \cup \{Q\}) - a(\mathcal{S})$ \label{line:max}
 \State $\mathcal{S} \gets \mathcal{S} \cup \{Q\}$ \label{line:add}
 \State $\Bar{Q} \gets$ Erode $Q$ by a radius of $\beta$ \label{line:erode}
 \State $\Bar{\mathcal{W}} \gets \Bar{\mathcal{W}} \setminus \Bar{Q}$ \label{line:remove}
 \EndWhile
 \Return $\mathcal{S}$
\end{algorithmic}
\end{algorithm}

\begin{remark*}
For the case of unconstrained overlap, the candidate sectors must be computed using an alternate approach, as the convex skull method may return the same rectangle repeatedly. One approach is to use a checkerboard decomposition of the environment \cite{vandermeulenTurnminimizingMultirobotCoverage2019} to identify rectangular sub-regions at each orientation. However, the candidate sectors only need to be computed once for this case.
\end{remark*}

\subsection{Analysis}

We now provide a brief analysis of the G-Sect algorithm. In the case of overlapping sectors (large $\beta$), we obtain an approximation factor for the G-Sect algorithm as a function of the minimum coverage ratio $\gamma$. We derive this factor following that of Wolsey's analysis of the greedy algorithm in \cite{wolseyAnalysisGreedyAlgorithm1982}. Given a set of candidate orientations $\Theta$, let $\mathcal{S}^*$ be the minimum sector decomposition of the environment, i.e. $|\mathcal{S}^*| \leq |\mathcal{S}|$ for any $\mathcal{S}$.

\begin{proposition}[Approximation factor for G-SectD]
    Given an environment $\mathcal{W}$ and a minimum coverage ratio of $\gamma$, let $\mathcal{S}^*$ be the minimum sector decomposition given candidate orientations $\Theta$. Then, the G-Sect algorithm computes a sector decomposition $\mathcal{S}$ satisfying
    \begin{align*}
        \frac{|\mathcal{S}|}{|\mathcal{S^*}|} \leq 1 + \ln
    \bigg( \frac{1}{1 - \gamma} \bigg).
    \end{align*}
    
    \label{prop:approx-factor}
\end{proposition}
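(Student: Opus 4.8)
The plan is to adapt Wolsey's analysis \cite{wolseyAnalysisGreedyAlgorithm1982} of the greedy algorithm for submodular set cover to our continuous coverage setting, tracking the \emph{uncovered area} after each greedy iteration. Write $\mathcal{S}_r$ for the partial decomposition produced by G-Sect after $r$ iterations, and let $k = |\mathcal{S}^*|$. The crux is a per-iteration lower bound on the greedy marginal gain $g_{r+1} = \max_{Q \in \Bar{\mathcal{Q}}} \delta_Q(\mathcal{S}_r)$. First I would order the optimal sectors as $\mathcal{S}^* = \{Q_1^*, \dots, Q_k^*\}$ and telescope:
\[
a(\mathcal{S}^* \cup \mathcal{S}_r) - a(\mathcal{S}_r) = \sum_{i=1}^{k} \big[\, a(\mathcal{S}_r \cup \{Q_1^*, \dots, Q_i^*\}) - a(\mathcal{S}_r \cup \{Q_1^*, \dots, Q_{i-1}^*\}) \,\big].
\]
By the diminishing-returns form of submodularity (Proposition \ref{prop:submodular}), each summand is at most $\delta_{Q_i^*}(\mathcal{S}_r) \leq g_{r+1}$, so the right-hand side is at most $k\,g_{r+1}$. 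Combining this with monotonicity, which gives $a(\mathcal{S}^* \cup \mathcal{S}_r) \geq a(\mathcal{S}^*) = |\mathcal{W}|$, yields $g_{r+1} \geq (|\mathcal{W}| - a(\mathcal{S}_r))/k$.

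Next I would set up and solve the resulting recursion. Let $u_r = |\mathcal{W}| - a(\mathcal{S}_r)$ denote the uncovered area, so $u_0 = |\mathcal{W}|$. The marginal-gain bound gives $u_{r+1} = u_r - g_{r+1} \leq u_r(1 - 1/k)$, and hence $u_r \leq |\mathcal{W}|(1 - 1/k)^r \leq |\mathcal{W}|\,e^{-r/k}$. The algorithm halts at the first iteration $R = |\mathcal{S}|$ with $a(\mathcal{S}_R) \geq \gamma|\mathcal{W}|$, i.e. $u_R \leq (1-\gamma)|\mathcal{W}|$; at the preceding iteration the stopping test of Line \ref{line:env_uncovered} failed, so $u_{R-1} > (1-\gamma)|\mathcal{W}|$. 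Chaining this with the decay estimate gives $(1-\gamma) < e^{-(R-1)/k}$, so $R - 1 < k\ln\frac{1}{1-\gamma}$. Dividing by $k$ and using $1/k \leq 1$ then delivers $|\mathcal{S}|/|\mathcal{S}^*| < 1 + \ln\frac{1}{1-\gamma}$, as claimed.

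The step I expect to be the main obstacle is justifying the inequality $\delta_{Q_i^*}(\mathcal{S}_r) \leq g_{r+1}$, which silently requires each optimal sector $Q_i^*$ to be an admissible greedy candidate, i.e. $\mathcal{S}^* \subseteq \Bar{\mathcal{Q}}$. This is exactly why the proposition is stated for the unconstrained-overlap regime: there the candidate set $\Bar{\mathcal{Q}}$ is computed once and fixed (see the remark following Algorithm \ref{alg:greedy-sector-decomp}), so greedy and the optimum draw from a common ground set and the telescoping is valid; with overlap-constrained erosion the candidate pool changes every iteration and the comparison breaks down. I would also be explicit that $\mathcal{S}^*$ is taken as a minimum-cardinality decomposition \emph{covering all of} $\mathcal{W}$ (so that $a(\mathcal{S}^*) = |\mathcal{W}|$ and the deficit is measured toward full coverage); measuring the deficit only toward the partial target $\gamma|\mathcal{W}|$ would make the recursion decay to a strictly positive limit and fail to bound the finite stopping time $R$. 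Finally I would note that the uncountability of $\mathcal{Q}$ causes no difficulty, since every quantity invoked is the area $a$ of a finite union of rectangles and the argument manipulates only finitely many sectors at once.
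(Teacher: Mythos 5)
Your proof is correct and follows essentially the same route as the paper: both are Wolsey's greedy analysis, with the per-iteration gain bound $g_{r+1} \geq (|\mathcal{W}| - a(\mathcal{S}_r))/|\mathcal{S}^*|$, geometric decay of the uncovered area, and the early-stopping condition $u_{R-1} > (1-\gamma)|\mathcal{W}|$ yielding the $1 + \ln\frac{1}{1-\gamma}$ factor. The only differences are presentational: you unpack Wolsey's bound and run the recursion directly on the infinite ground set (noting only finitely many sectors are ever manipulated), whereas the paper cites the finite-case bound and transfers it to the infinite case via the finite surrogate $\mathcal{Q}' = \mathcal{S} \cup \mathcal{S}^*$; and the admissibility condition you flag (each $Q_i^*$ must be a valid greedy candidate) is exactly what the paper asserts as ``correctness of the greedy step,'' so you have identified, not introduced, the one point both arguments leave informal.
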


\possibleremoval{\begin{proof}
    We first prove this proposition for a \textit{finite} set of candidate sectors $\mathcal{Q}$ that cover the environment $\mathcal{W}$. Let $T$ be the number of iterations run by the greedy algorithm, and let $\mathcal{S}^r$ for $1 \leq r \leq T$ be the subset that is picked by the greedy algorithm after $r$ iterations. Following Wolsey's analysis of the greedy algorithm for SSC over a finite set \cite{wolseyAnalysisGreedyAlgorithm1982}, we have the following approximation factor:
    \begin{align}
        \frac{|\mathcal{S}|}{|\mathcal{S^*}|} &\leq 1 + \ln
    \bigg( \frac{a(\mathcal{Q}) - a(\emptyset)}{a(\mathcal{Q}) - a(\mathcal{S}^{T-1})} \bigg). \label{eq:wolsey_bound}
    \end{align}
    Since $a$ is normalized, we have $a(\emptyset) = 0$. Since $\mathcal{Q}$ covers $\mathcal{W}$, we have that $a(\mathcal{Q}) = |\mathcal{W}|$. Given that G-Sect terminates after a minimum coverage of $\gamma|\mathcal{W}|$, we have that $a(\mathcal{S}^{T-1}) \leq \gamma|\mathcal{W}|$. Substituting these values in Eq. (\ref{eq:wolsey_bound}), we get
    \begin{align}
        \frac{|\mathcal{S}|}{|\mathcal{S^*}|} &\leq 1 + \ln
    \bigg( \frac{|\mathcal{W}|}{|\mathcal{W}| - \gamma |\mathcal{W}|} \bigg) = 1 +  \ln
    \bigg( \frac{1}{1 - \gamma} \bigg)
    \label{eq:our-bound}
    \end{align}
    
    We now show that this bound also applies when $\mathcal{Q}$ is infinite. Let $\mathcal{S}^* = \{Q^*_1, Q^*_2, \dots Q^*_k\}$ be the minimum sector decomposition of $\mathcal{W}$ from all of $\mathcal{Q}$. Now, let $\mathcal{S} = \{Q_1, Q_2, \dots Q_T\}$ be the sector decomposition obtained by G-Sect given $\gamma$. Using these sets, we construct a finite set of candidate sectors $\mathcal{Q}' = \mathcal{S} \cup \mathcal{S}^*$. On this finite set $\mathcal{Q}'$, we can run G-Sect and for each greedy iteration $r$, we have the following for all $Q_i^* \in \mathcal{S}^*$:
    \begin{align*}
        a(\mathcal{S}^r \cup Q_{r+1}) - a(\mathcal{S}^r) \geq a(\mathcal{S}^r \cup Q^*_{i}) - a(\mathcal{S}^r).
    \end{align*}  
    This follows from the correctness of the greedy step for a set of candidate orientations $\Theta$. As a result, the approximation factor holds on the finite set $\mathcal{Q}'$. However, since $\mathcal{Q}'$ contains the choices of G-Sect from $\mathcal{Q}$ and the optimal set from $\mathcal{Q}$, the approximation factor also holds on the infinite set $\mathcal{Q}$. This proves the proposition.
\end{proof}}


\possibleaddition{The proof of Proposition \ref{prop:approx-factor} is deferred to the extended version due to space constraints.} Obtaining a similar approximation factor for the case of minimally-overlapping sectors is a non-trivial task. However, we have found that decompositions using minimally overlapping sectors are better suited for coverage planning as they reduce instances of double coverage. In Section \ref{sec:sim-results}, we provide simulation results of planning coverage paths using minimally overlapping sectors.

\subsection{Local sector merges}

\possibleaddition{After G-Sect computes a decomposition, we merge neighbouring sectors to further reduce the number of sectors. Specifically, we conduct merges where the lawnmower path of one sector can be extended to cover a neighbouring sector. Note that these merges may result in sectors that are not necessarily rectangular, i.e. they do not strictly satisfy Problem \ref{prob:sect_decomp}. However, they still satisfy the key restriction that each sector is covered by a lawnmower path. More details about the merging is provided in the extended version.}

\possibleremoval{
After a decomposition is computed using G-Sect, we merge neighbouring sectors to further reduce the number of sectors. Specifically, we conduct merges where the lawnmower path of one sector can be extended to cover a neighbouring sector. Given the sector decomposition and corresponding sector orientations, we start by counting the number of coverage lines required to cover each sector. We then iterate through the sectors in the order of increasing sector area (i.e. smallest to largest) to check for possible merges. For each sector $Q$, we look for an adjacent sector $Q_{\text{adj}}$ with a coverage orientation $\theta_{\text{adj}}$ such that the number of coverage lines to cover $Q \cup Q_{\text{adj}}$ along the orientation $\theta_{\text{adj}}$ is less than or equal to the cumulative number of lines required to cover each sector individually. The reduction in coverage lines indicates that the lawnmower path of $Q_{\text{adj}}$ can be extended to cover $Q$. Any ties are broken by choosing $Q_{\text{adj}}$ with the largest area. The merged sector is added back to the set of sectors and the process is repeated until all possible merges have been completed. The resulting sector decomposition forms our final solution to Problem \ref{prob:sect_decomp}.

Note that these merges may result in sectors that are not necessarily rectangular, i.e. they do not strictly satisfy Problem \ref{prob:sect_decomp}. However, they still satisfy the key restriction that each sector is covered by a lawnmower path. To obtain a strictly rectangular decomposition, this step may be skipped. In practice, a similar strategy may also be used to cover areas that are uncovered by the sector decomposition.
}
\section{Sector Touring}
\label{sec:sect_touring}

\begin{table}
\vspace{6pt}
\centering
\caption{Robot parameters used for experiments}
\label{table:parameters}
\begin{tabular}{@{} l l@{}}
\toprule
Parameter & Value \\
\midrule
Coverage tool width $l$ & 0.8 $m$\\
Maximum linear velocity & 1 $m/s$\\
Linear acceleration & $\pm 0.5$ $m/s^2$ \\
\bottomrule
\end{tabular}
\vspace{-2mm}
\end{table}

We now describe our approach to obtain a coverage path for the environment using the sector decomposition from Section \ref{sec:sect_decomp}. Similar to \cite{bahnemannRevisitingBoustrophedonCoverage2021}, we formulate this as a GTSP that \docupdate{aims to minimize the time to cover all sectors by determining} \docupdate{(i) }the visitation order of the sectors in the decomposition\docupdate{, and (ii) entry and exit points for each sector.} We create an auxiliary graph where the vertices are grouped into sets. Each set in the graph represents the coverage of a sector and consists of at most four vertices that represent the directed paths to cover the sector, i.e. two possible lawnmower paths per sector where each path can be traversed in two directions. \docupdate{The chosen path and traversal direction determine the entry and exit points for each sector.} The graph edges represent the transition paths to traverse from one sector to another. The edge costs are determined by the time to travel along an obstacle-free path from one end of a lawnmower path to another determined using a visibility graph planner \cite{obermeyerVisiLibityLibraryVisibility2008}. We determine this travel time using a piecewise constant acceleration robot model where the robot travels along a straight line with constant acceleration until a maximum velocity is reached. \docupdate{An example set of parameters for this model is given in Table \ref{table:parameters}.} In this work, we do not directly penalize the turning time of the robot, but the above robot model penalizes the time to stop to perform an upcoming turn. The GTSP tour on the resulting graph gives the coverage path for the environment as a series of connected sectors.

\section{Results}
\label{sec:sim-results}

In this section, we present simulation results of coverage planning using the proposed sector decomposition approach for maps of real-world environments. Specifically, our dataset includes 2 street maps of New York and Shanghai cities from \cite{luTMSTCPathPlanning2023} and 2 anonymized environments obtained from combining maps of real-world environments obtained from our industry partner Avidbots. We compare the coverage performance of our approach against two state-of-the-art approaches: (i) an exact decomposition approach called boustrophedon cell decomposition (BCD) \cite{bahnemannRevisitingBoustrophedonCoverage2021}, and (ii) our previous approximate decomposition approach called Optimal Axis-Parallel Rank Partitioning (OARP) approach \cite{rameshOptimalPartitioningNonConvex2022}.


For our simulations, we set the desired coverage of the decomposition $\gamma$ to be $95\%$ of the environment, after which the greedy algorithm is terminated. \docupdate{This choice helps eliminate sectors that cover small and inaccessible areas and effectively reduces the number of sectors in the decomposition.} We identify the environment's walls using Probabilistic Hough Transform \cite{kalviainenProbabilisticNonprobabilisticHough1995} and obtain the set of candidate coverage orientations $\Theta$. Table \ref{table:parameters} shows the robot parameters used for the simulations, including the coverage tool width $l$. For the BCD and sector decomposition approaches, we ignore small obstacles (obstacle area $< 4l^2$) as they can be evaded effectively during coverage and do not affect the coverage orientation. For OARP, some these obstacles are removed automatically during the construction of the grid decomposition.

To motivate our configuration of G-Sect pertaining to sector overlap, we present the results of computing a sector decomposition for a simpler test map using two values of the sector erosion radius $\beta$: $l/4$ and $2l$. Figure \ref{fig:angled comparison} shows the resulting decompositions, where we observe that increasing the amount of overlap reduces the number of sectors but also results in overlapping coverage lines (i.e., double coverage). However, we observe that a small amount of overlap with local sector merges identifies a decomposition with non-overlapping coverage lines. Therefore, we conduct the rest of the experiments with $\beta = l/4$ and conduct sector merges.

\begin{table}
    \centering
    \caption{Comparison of number of sectors in each decomposition}
    \begin{tabular}{c|c|c}
        \toprule
         Map &  BCD \cite{bahnemannRevisitingBoustrophedonCoverage2021} & \textbf{Our Approach}  \\
         \midrule
         Anonymous 1 & 55 & \textbf{32} \\
         Anonymous 2 & 75 & \textbf{43} \\
         New York & 136 & \textbf{73} \\
         Shanghai & 86 & \textbf{44} \\
         \bottomrule
    \end{tabular}
    \label{tab:decomposition}
\end{table}

\begin{figure}
    \centering
    \subcaptionbox{$\beta = l/4$}
    {
    \includegraphics[width=0.295\linewidth]{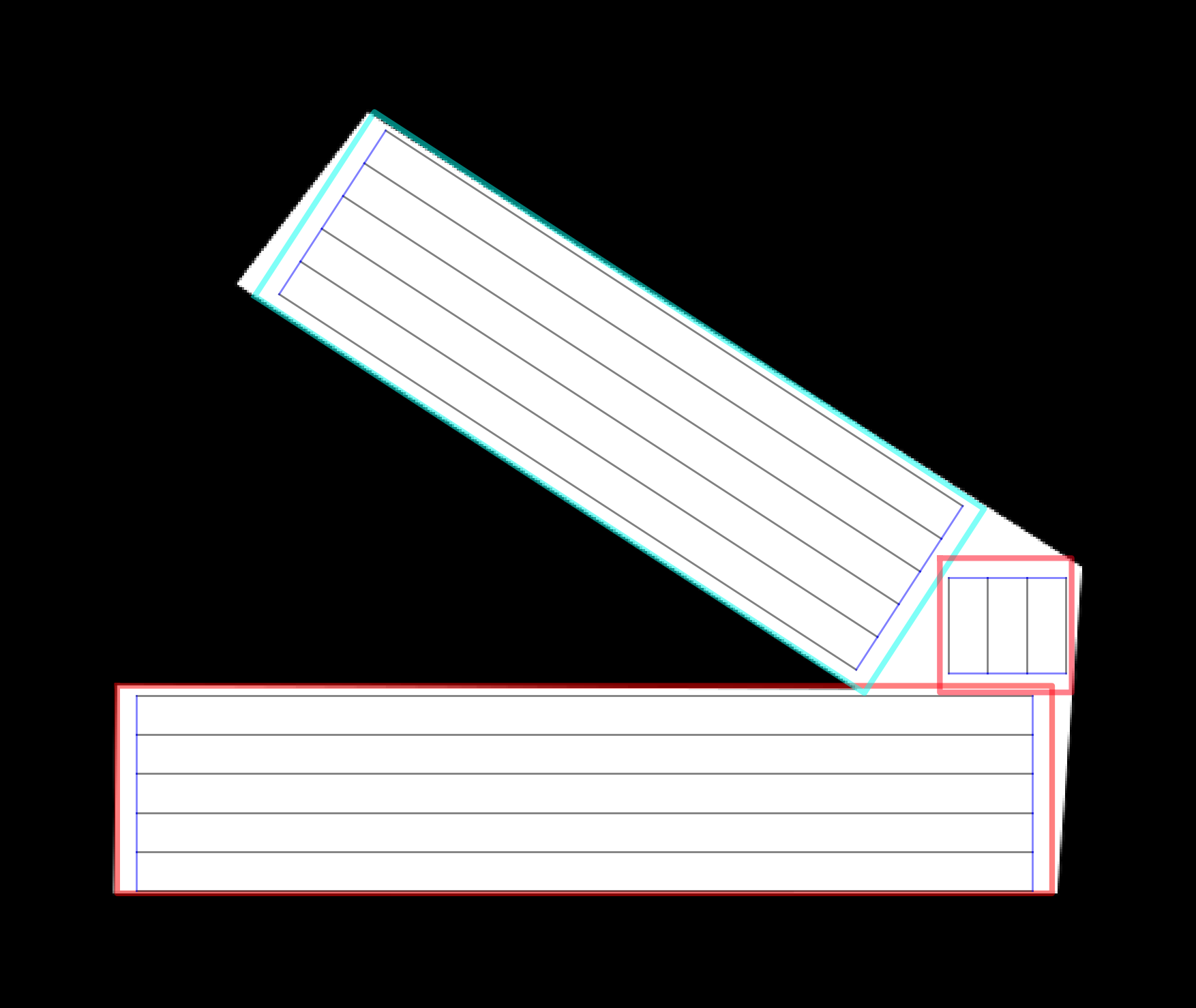}
    }
    \vspace{2mm}
    \subcaptionbox{$\beta = 2l$}
    {
    \includegraphics[width=0.295\linewidth]{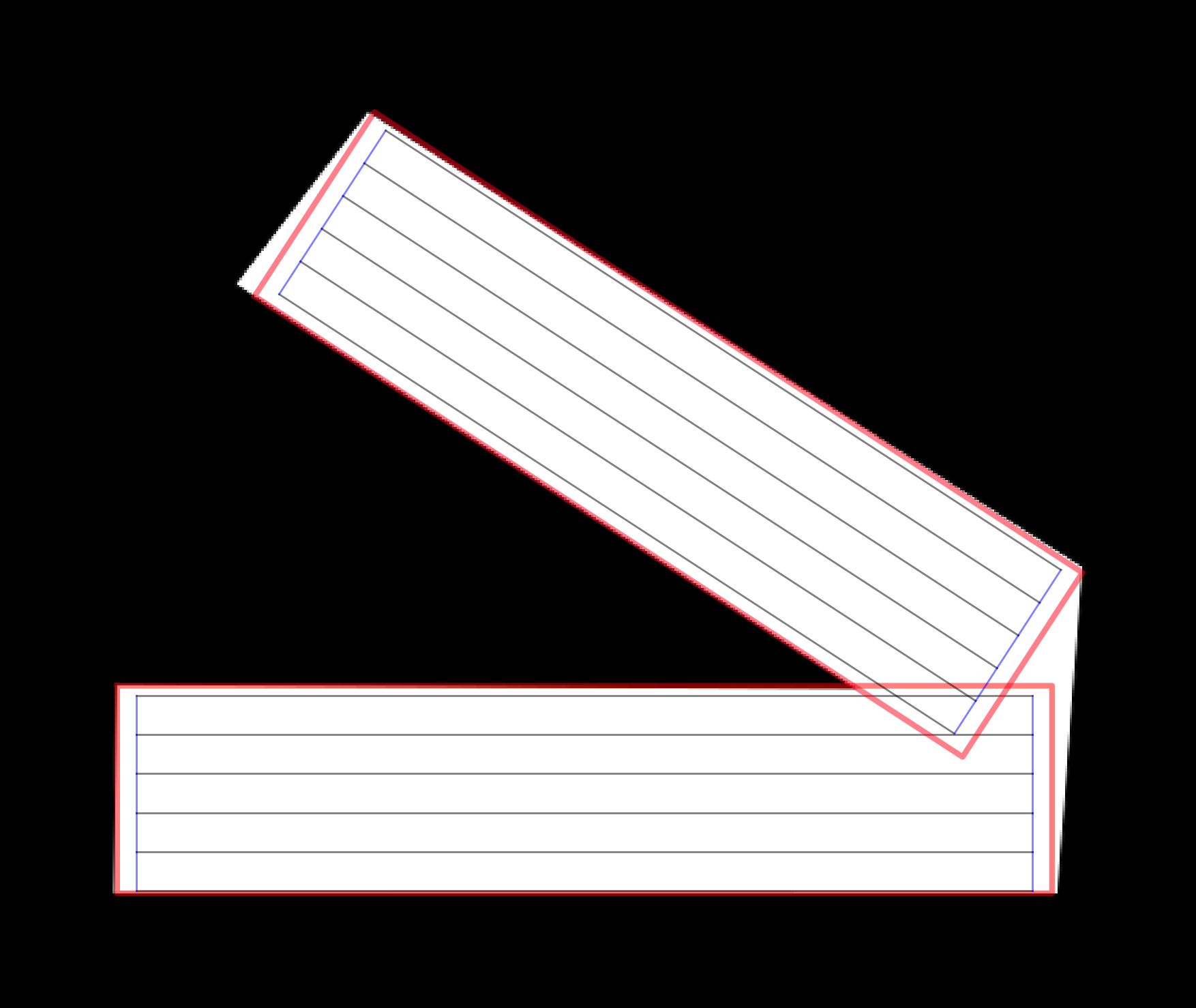}
    }
    \subcaptionbox{$\beta = l/4$ \& merging}
    {
    \includegraphics[width=0.295\linewidth]{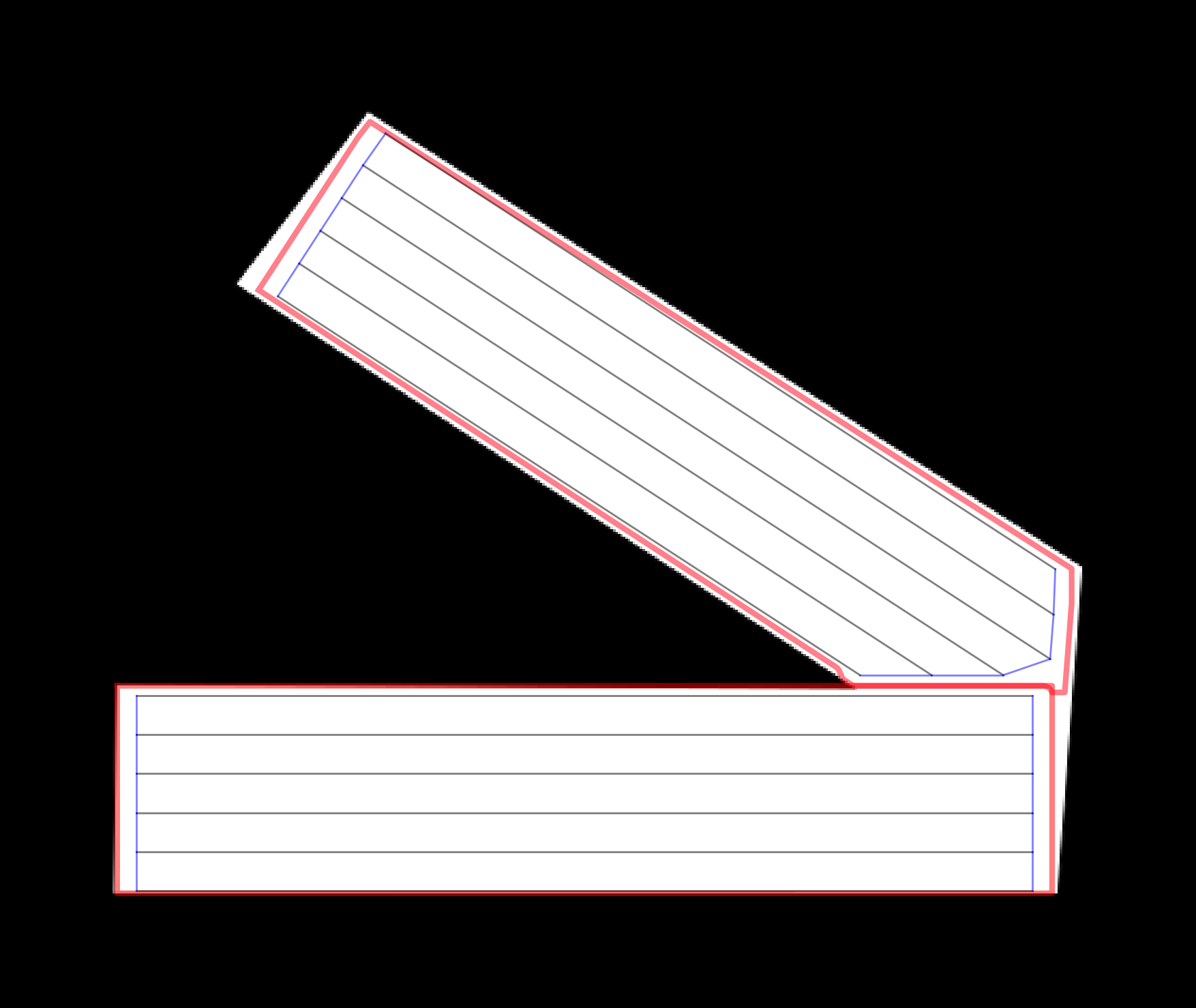}
    }
    \caption{G-Sect results with varying sector overlap and merging.
    \vspace{-4mm}}
    \label{fig:angled comparison}
\end{figure}

Figure \ref{fig:qualitative-results} shows the decompositions computed using G-Sect for our test maps. We observe that sector decomposition results in intuitive coverage lines, especially in narrow regions where the coverage lines are oriented along the edges of the environment. Table \ref{tab:decomposition} shows the results of comparing sector decomposition against that of BCD (exact decomposition) \cite{bahnemannRevisitingBoustrophedonCoverage2021}. \docupdate{We observe that sector decomposition approach covers the environment with fewer sectors, predominantly due to (i) our choice of $\gamma$, and (ii) sector merging.} In contrast, BCD results in over-decomposition due to the large number of reflex vertices present in the test environments. G-Sect is less influenced by these vertices for the chosen $\gamma$.


We now compare the coverage planning performance of the sector decomposition approach against that of BCD and OARP. The robot model from Section \ref{sec:sect_touring} is used to compute the coverage path cost with parameters as per Table \ref{table:parameters}. Table \ref{table:performance} shows the results of comparing (i) number of coverage lines, (ii) percent area covered, and (iii) coverage path cost. For every map, we run each approach for 5 trials and present the average for each metric. We observe that the proposed approach achieves the best coverage path cost for all test maps. We believe that this is due to the intuitive coverage line placements achieved by G-Sect, e.g., the coverage lines align with the boundaries and the holes. For the New York and Shanghai maps, we observe that OARP computes paths with higher cost than the proposed approach despite having fewer coverage lines. This is because constraining the robot's motion to axis-parallel (horizontal/vertical) orientations creates longer transition paths (where the robot performs no coverage) that add to the cost. However, for environments with axis-parallel boundaries, we expect the OARP approach to perform better because minimizing the number of coverage lines results in fewer transitions. Thus, a future direction we look to explore is a combination of sector decomposition and OARP \cite{rameshOptimalPartitioningNonConvex2022} that minimizes the number of coverage lines in the path while considering multiple coverage orientations. Meanwhile, the BCD approach covers the environment entirely, but results in relatively expensive coverage paths as a result of over-decomposition.

\begin{table*}
	\centering
 \caption{Comparison of coverage planning performance for the test maps.}
	\begin{tabular}{@{} l rrr l rrr l rrr l@{}}
		\toprule
		& \multicolumn{3}{c}{BCD \cite{bahnemannRevisitingBoustrophedonCoverage2021}} & \phantom{a} & \multicolumn{3}{c}{OARP \cite{rameshOptimalPartitioningNonConvex2022}} & \phantom{a} & \multicolumn{3}{c}{\textbf{Sector Decomposition}}\\
		\cmidrule{2-4} 		\cmidrule{6-8}  \cmidrule{10-12} 
		& \# Lines & $\%$ Area & Cost (s) & & \# Lines & $\%$ Area & Cost (s) & & \# Lines & $\%$ Area & Cost (s) \\
		\midrule
  Anonymous 1 & 187 & \textbf{100\%} & 2297 & & 169 & 95.4\% & 1850 & & \textbf{158} & 95\% & \textbf{1714} \\
		Anonymous 2 & 204 & \textbf{100\%} & 2566 & & 130 & 94.2\% & 2425 & & \textbf{125} & 95\% & \textbf{2255}\\
		New York & 300 & \textbf{100\%} & 4692 & & \textbf{219} & 94.3\% & 3880 & & 267 & 95.1\% & \textbf{3874}\\
		Shanghai & 235  & \textbf{100\%} & 4540 & & \textbf{187} & 97\% & 3713 & & 221 & 95\% & \textbf{3687} \\

		\bottomrule

	\end{tabular}
	\label{table:performance}
\end{table*}

\begin{figure*}
    \centering
    \subcaptionbox{New York}
    {
    \includegraphics[width=0.17\linewidth]{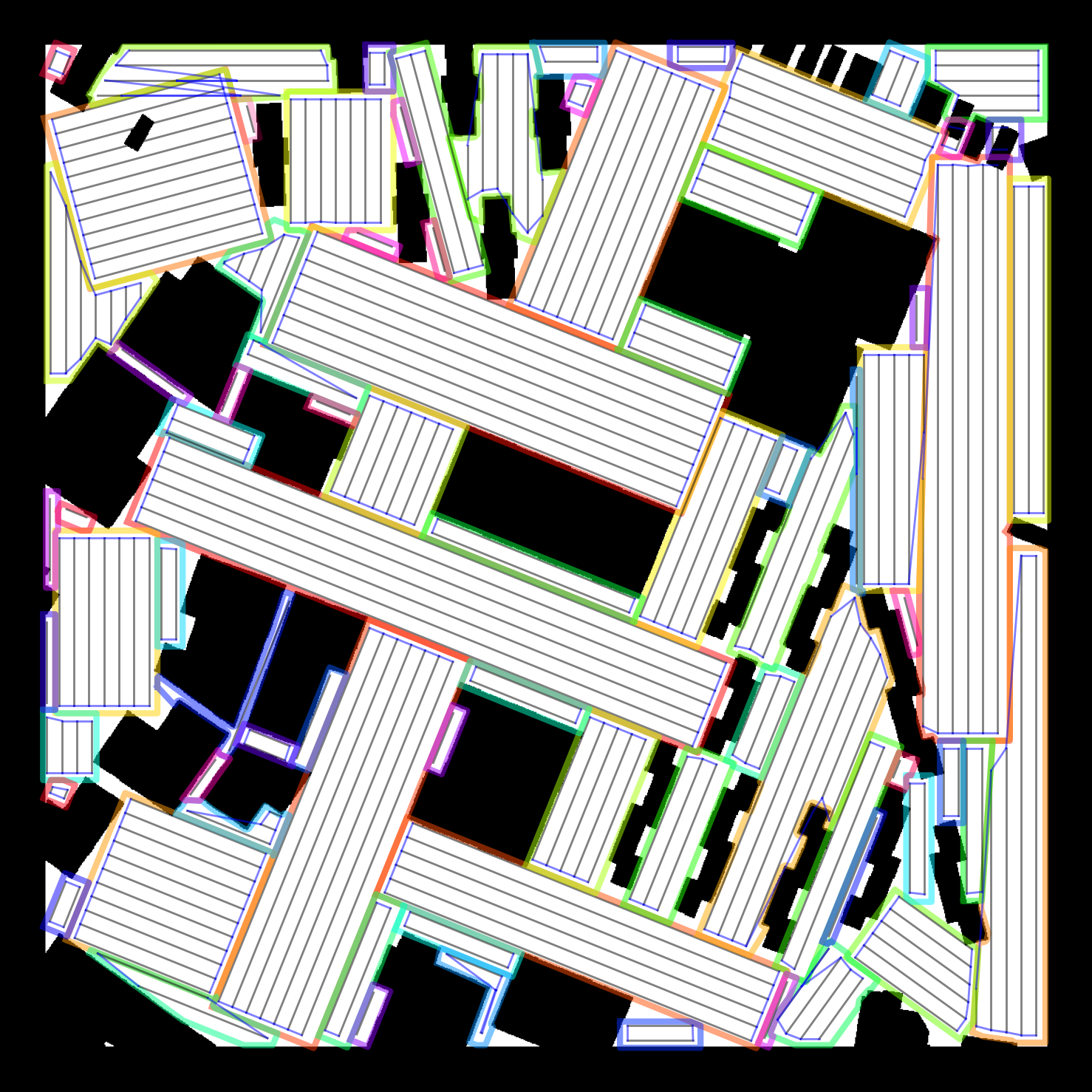}
    }
    \vspace{2mm}
    \subcaptionbox{Shanghai}
    {
    \includegraphics[width=0.17\linewidth]{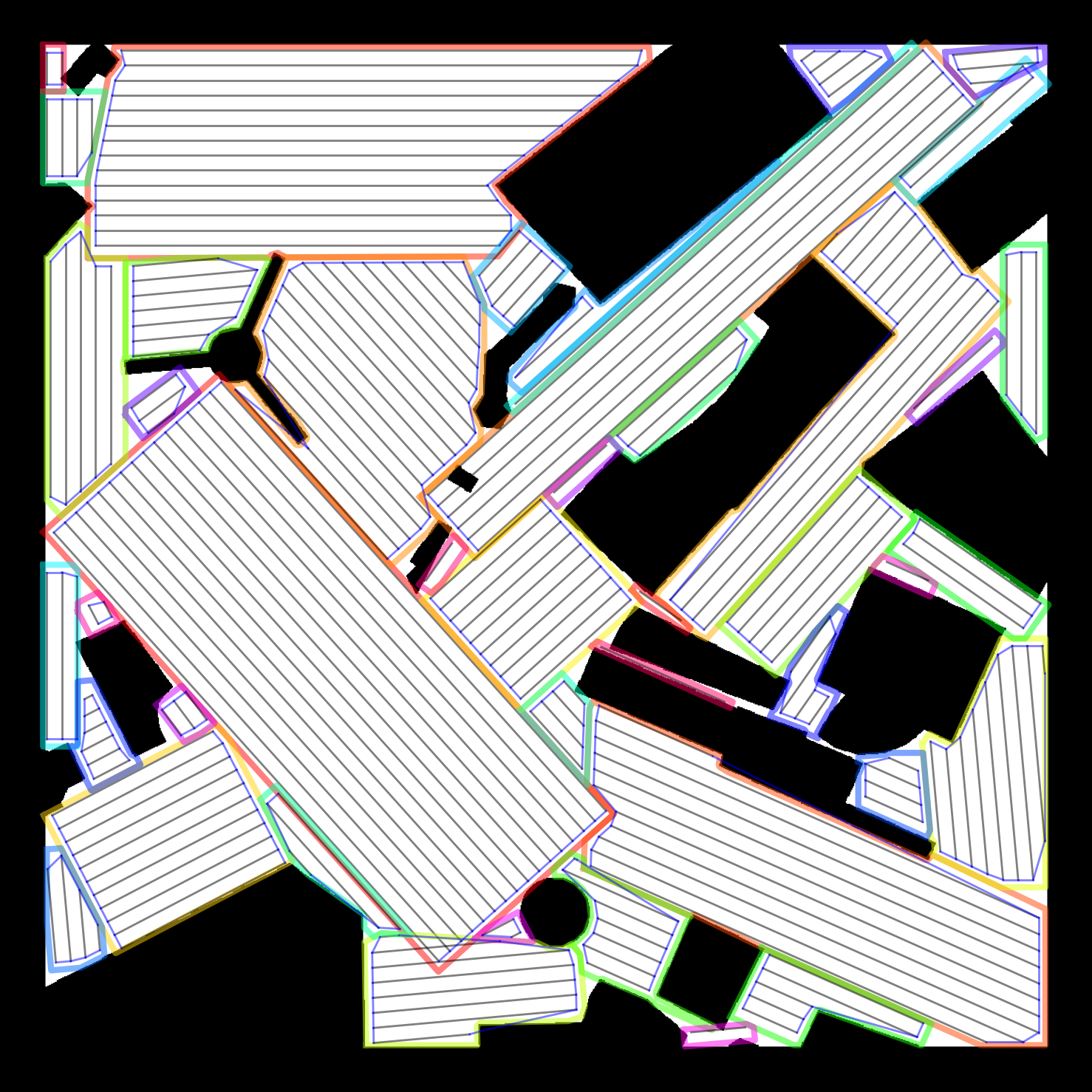}
    }
    \vspace{2mm}
    \subcaptionbox{Anonymous 1}
    {
    \includegraphics[width=0.28\linewidth]{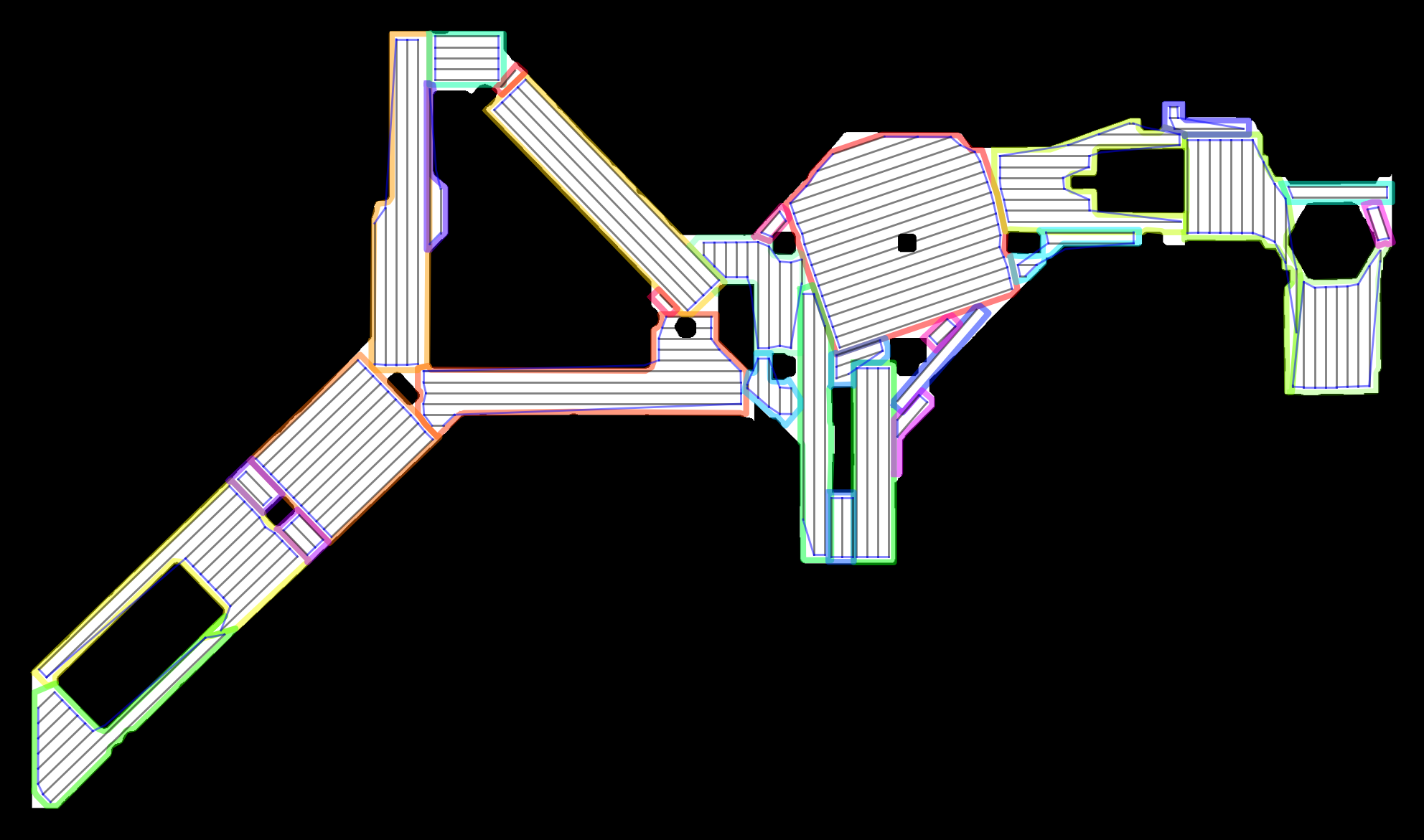}
    }
    \subcaptionbox{Anonymous 2}
    {
    \includegraphics[width=0.28\linewidth]{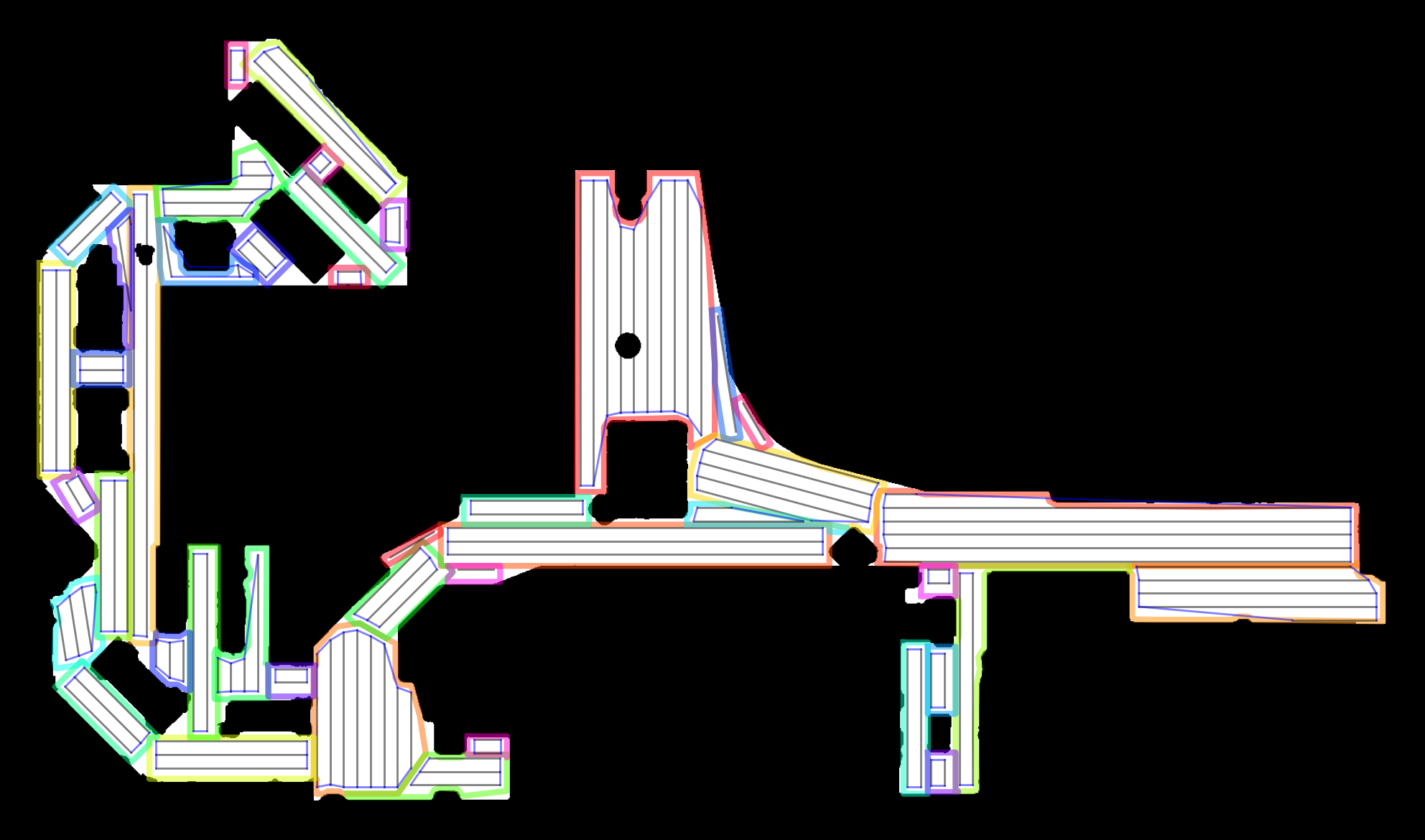}
    }
    \caption{\vspace{-2mm}Sector decomposition results for the test maps.}
    \label{fig:qualitative-results}
\end{figure*}


\section{Conclusion}

In this paper, we proposed an environment decomposition approach called \textit{sector decomposition} for robot coverage planning. The proposed approach aims to decompose the environment into the minimum number of rectangular sectors, which can each be covered using a lawnmower path oriented along its longest edge. We showed that sector decomposition is an instance of the submodular set cover (SSC) problem, and proposed a greedy approximation algorithm with a guarantee on the number of sectors in the decomposition. Finally, we present coverage planning results using the proposed approach for real-world environments and show that it improves upon existing coverage planning approaches \cite{bahnemannRevisitingBoustrophedonCoverage2021, rameshOptimalPartitioningNonConvex2022}. A future direction of this work is to minimize the number of turns in the coverage path generated using sector decomposition, for cases when robot turning is detrimental to coverage time and quality.


\section*{Acknowledgment}
This work was supported by Canadian Mitacs Accelerate Project IT16435 and Avidbots Corp, Kitchener, ON, Canada.

\bibliographystyle{IEEEtran}
\bibliography{references}

\end{document}